\documentclass{article}

\usepackage{arxiv}

\usepackage[utf8]{inputenc} 
\usepackage[T1]{fontenc}    
\usepackage{hyperref}       
\usepackage{url}            
\usepackage{booktabs}       
\usepackage{amsfonts}       
\usepackage{nicefrac}       
\usepackage{microtype}      
\usepackage{lipsum}		
\usepackage{graphicx}
\usepackage{natbib}
\usepackage{doi}
\usepackage{amsmath}
\usepackage[caption=false,font=footnotesize,labelfont=rm,textfont=rm]{subfig}

\usepackage{amsthm}
\newtheorem{lemma}{Lemma}
\newtheorem{theorem}{Theorem}

\newtheorem{definition}{Definition}
\newtheorem{assumption}{Assumption}

\title{Rethinking Plasticity in Deep Reinforcement Learning}

\author{%
  Zhiqiang He \\
  The University of Electro-Communications \\
  Tokyo, Japan \\
  \texttt{hezhiqiang@ieee.org} \\
}

\begin{document}
\maketitle

\begin{abstract}
	This paper investigates the fundamental mechanisms driving plasticity loss in deep reinforcement learning (RL), a critical challenge where neural networks lose their ability to adapt to non-stationary environments. While existing research often relies on descriptive metrics like dormant neurons or effective rank, these summaries fail to explain the underlying optimization dynamics. We propose the Optimization-Centric Plasticity (OCP) hypothesis, which posits that plasticity loss arises because optimal points from previous tasks become poor local optima for new tasks, trapping parameters during task transitions and hindering subsequent learning. We theoretically establish the equivalence between neuron dormancy and zero-gradient states, demonstrating that the absence of gradient signals is the primary driver of dormancy. Our experiments reveal that plasticity loss is highly task-specific; notably, networks with high dormancy rates in one task can achieve performance parity with randomly initialized networks when switched to a significantly different task, suggesting that the network's capacity remains intact but is inhibited by the specific optimization landscape. Furthermore, our hypothesis elucidates why parameter constraints mitigate plasticity loss by preventing deep entrenchment in local optima. Validated across diverse non-stationary scenarios, our findings provide a rigorous optimization-based framework for understanding and restoring network plasticity in complex RL domains.
\end{abstract}
\section{Introduction}

The primary challenge in modern artificial intelligence lies in the inherent mismatch between the stationary assumptions of current learning paradigms and the non-stationary nature of real-world environments \cite{dohare2024loss,galashov2024nonstationary}. While artificial neural networks (ANNs) have achieved remarkable success in various domains, their design and training methods are based on the assumption of a stationary data distribution, which greatly limits their practical applicability. This limitation becomes even more apparent when compared to biological neural systems, which possess the ability to continuously adapt through synaptic plasticity \cite{puderbaugh2023neuroplasticity}. This contrast has prompted researchers to investigate the fundamental limitations of ANNs \cite{cohen2022recent}. In particular, the phenomenon of plasticity loss is a critical issue in reinforcement learning (RL) due to its complex dynamics compared to supervised learning \cite{the_primacy_bias}. This challenge is further compounded by the need for RL to adapt value functions to changing rewards and evolving policies, which significantly affect state-action values \cite{elsayed2024weight}. Unlike supervised learning, RL requires a higher level of network plasticity to handle dynamic environments and support strategy evolution during interactive learning \cite{dohare2024loss}. Therefore, understanding and addressing plasticity loss is crucial in RL domains.


Several hypotheses have been proposed to explain the loss of plasticity in neural networks. However, these hypotheses primarily focus on establishing correlations between specific metrics and performance, such as dormant neurons \cite{sokar2023dormant}, loss landscape characteristics \cite{LyleZNPPD23}, and effective rank \cite{gulcehre2022an}. While these metrics are merely a summary of certain phenomena, they do not fully explain the fundamental mechanisms driving plasticity loss \cite{LyleZNPPD23, gulcehre2022an, disentanglingcausesplasticityloss, lewandowski2024directionscurvatureexplanationloss, Hare_Tortoise}. We conducted a simple experiment to highlight the limitations of metric-based analysis. Specifically, we trained the efficient reinforcement learning algorithm PPO under two configurations: one using the ReLU activation function and the other without any activation function. The performance in terms of return and dormancy rate is presented in Figure \ref{figure1}. Interestingly, we observed that a higher dormancy ratio corresponds to a faster convergence rate. This finding contradicts the conclusion that dormancy leads to a loss of plasticity, yet aligns with the performance improvements attributed to sparsity \cite{cheng2024survey}.

\begin{figure}[!h]
\centering
\includegraphics[width=6.5in]{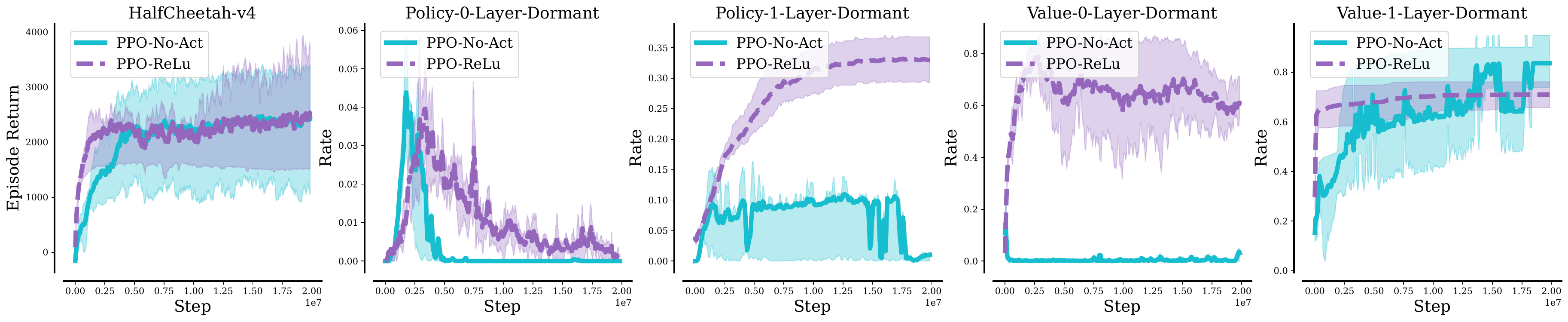}%
\caption{The relationship between return and dormant neurons rate. The leftmost graph shows the return curve, while the next four depict the dormancy rates of policy and value layers. PPO-No-Act lacks activation functions, while PPO-ReLu uses ReLu. PPO-ReLu achieves faster convergence with higher dormancy rates, matching the performance of PPO-No-Act.}
\label{figure1}
\end{figure}


This gap in understanding highlights the critical need for a more rigorous investigation into the underlying mechanisms that govern plasticity loss in neural networks. In contrast to existing metrics, we propose a novel hypothesis that examines plasticity loss from an optimization perspective. Our hypothesis analyzes why this problem occurs during learning and suggests that plasticity loss happens because optimal points from previous tasks become poor optimal points for new tasks. This causes parameters to become trapped in local optima during task transitions, preventing effective learning. Our hypothesis explains several observed phenomena: when tasks change, parameters stuck in local optima from previous tasks negatively impact current task performance, leading networks to adjust other neurons to zero out these trapped neurons' outputs - resulting in neuron dormancy \cite{sokar2023dormant}. Additionally, optimizing other parameters while abandoning those in local optima makes the loss landscape steeper \cite{lyle2022understanding}. This also explains why parameter constraints can increase plasticity \cite{elsayed2024weight,kumar2023maintaining}, as they limit network expressivity from the start, resulting in fewer neurons falling into local optima.


The paper presents a hypothesis and validates it through several aspects: \textbf{Firstly}, previous research has shown that plasticity loss is not a data issue, but rather a result of the neural network's inability to learn from the data \cite{the_primacy_bias}. However, our hypothesis suggests that the core issue lies in the relationship between learning tasks, specifically the relationship between objective functions. When the objective function of the previous task has little correlation with the current task's objective function, it reduces the likelihood of parameters getting trapped in local optima, thus mitigating the plasticity loss problem. \textbf{Secondly}, when tasks change, there is an increase in zero-gradient points, indicating more local optima. Once a neuron enters a zero-gradient state, it is highly likely to remain in the zero-gradient state during subsequent updates. Interestingly, perturbing these local optimal points does not lead to performance degradation, and may even improve performance.

Our main contributions can be summarized as follows:

\begin{enumerate}
    \item \textbf{Novel Hypothesis on Plasticity Loss:} This paper introduces a new hypothesis analyzing plasticity loss from an optimization perspective. It claims that plasticity loss arises because optimal points for previous tasks become suboptimal for new tasks, trapping parameters in local optima during task transitions and hindering neural network learning.
    
    \item \textbf{Explanation of Observed Phenomena:} The hypothesis explains several observed phenomena, including neuron dormancy during task changes, steeper loss landscapes when optimizing parameters near local optima, and the effectiveness of parameter constraints in enhancing plasticity by limiting the network's expressivity in its early stages.
    
    \item \textbf{Experimental Validation and Results:} Validated the hypothesis and techniques through experiments, showing increased local optima during task transitions, performance improvements via perturbation of these optima, and significant reduction of plasticity loss with gradient-free optimization methods. Our results demonstrate substantial improvements in both adaptation speed and final performance across diverse non-stationary scenarios.
\end{enumerate}


\section{Related Work}

Neural networks often exhibit a phenomenon known as diminishing adaptability to new task during the learning process. This has been described in various ways in reinforcement learning literature, such as primacy bias \cite{the_primacy_bias}, dormant neuron phenomenon \cite{sokar2023dormant}, implicit underparameterization \cite{kumar2021implicit}, and capacity loss \cite{lyle2022understanding}. In this paper, we will use the term "loss of plasticity" \cite{dohare2024loss} to refer this phenomenon. While it shares some similarities with catastrophic forgetting \cite{elsayed2024addressing}, the focus of plasticity loss is on the network's ability to learn new patterns, regardless of whether old knowledge is retained or not \cite{dohare2024loss}. In this section, we will review existing literature to analyze various hypotheses proposed to explain this phenomenon and explore current solution approaches.

\textbf{Hypothesis for Plasticity Loss.} Several hypotheses have been proposed to explain the loss of plasticity in neural networks. One prominent view suggests that networks become overly focused on early interaction data \cite{the_primacy_bias}, potentially due to ReLU activation functions causing neurons to become dormant \cite{sokar2023dormant}. However, in cases of redundant features, it may be reasonable for networks to suppress irrelevant features by adjusting weights and output to zero. Other research suggests that imposing specific constraints on network weights can enhance plasticity \cite{elsayed2024weight, kumar2023maintaining, lewandowski2024learning}. Additionally, some studies attribute plasticity loss to properties of the loss landscape \cite{LyleZNPPD23} and effective rank \cite{gulcehre2022an}. Plasticity loss caused by a reduction of Hessian Rank \cite{lewandowski2024directionscurvatureexplanationloss}. Furthermore, \cite{shin2024dash} argue that neural networks tend to memorize noise during the pretraining stage.  Despite these various hypotheses, the fundamental mechanisms underlying plasticity loss remain unclear \cite{LyleZNPPD23, gulcehre2022an,disentanglingcausesplasticityloss,lewandowski2024directionscurvatureexplanationloss, Hare_Tortoise}. In contrast to existing work, we propose a hypothesis from an optimization perspective that can explain various observed metrics such as neuron dormancy and loss landscape characteristics.

\textbf{Preventing plasticity loss.} Research on neural plasticity in deep reinforcement learning is still in its early stages, with current studies falling into two main categories. The first category focuses on preventing the loss of plasticity during the training process through various implementation-level techniques. These approaches include: normalization methods that regulate input or output distributions \cite{lyle2024normalization}; weight constraint mechanisms that prevent excessive parameter magnitudes \cite{elsayed2024weight} or parameters that deviate too far from the initial distribution \cite{kumar2023maintaining} or early feature predictions \cite{lyle2022understanding}. Additionally, new activation functions have been designed to ensure that partial neurons are always activated for the same feature \cite{crelu}, and adaptive batch size strategies have been implemented \cite{ceron2023small}. It has been observed that large weight magnitudes can negatively impact optimization dynamics and are often associated with overfitting, leading to decreased performance and potentially contributing to the challenges encountered during learning \cite{elsayed2024weight}. Weight Clip limits the change in function, which is crucial for on-policy reinforcement learning to prevent the policy from deviating too much from its previous state \cite{elsayed2024weight}.

\textbf{Solution for plasticity loss.} The second category focuses on restoring plasticity in networks where neurons have already lost their adaptive capabilities, known as the parameter resetting method \cite{dohare2024loss, juliani2024study, liu2024neuroplastic}. These approaches include several innovative methods: Neuroplastic Expansion \cite{liu2024neuroplastic}, which enables incremental growth of network neurons; Continual Backpropagation \cite{dohare2024loss}, which identifies ineffective neurons and reinitializes their parameters; Continual backpropagation sets a new utility metric and lets a small fraction of less-used units are continually and randomly reinitialized \cite{dohare2024loss}. ReDO \cite{sokar2023dormant}, which detects and reinitializes dormant neurons; and Plasticity Injection \cite{nikishin2024deep}, which maintains network adaptability by introducing additional neurons during the training process. These approaches commonly rely on a binary metric that classifies neurons as either active or dormant based solely on whether their outputs are non-zero or zero. However, this dormancy metric lacks theoretical justification, as zero outputs may represent optimal and meaningful neural responses in certain contexts. In fact, some studies even adopt this metric as an optimization objective \cite{xu2024drm}.Some papers focus on optimizers, resetting them to clear historical information for enhanced network plasticity, particularly during rapid shifts in data distribution \cite{asadi2023resetting}. Unlike these methods, we propose a gradient-based metric to quantify neuronal plasticity, aiming to provide a more accurate understanding of network plasticity characteristics.

\section{Optimization-Centric Plasticity (OCP) Hypothesis}

\subsection{An Intuitive Example of OCP}

To demonstrate the OCP hypothesis, let's imagine a situation where we have two tasks, $f_{1}$ and $f_{2}$, that need to be completed in sequence. As shown in Figure \ref{example_ocp}, the transition from \textbf{Task 1} to \textbf{Task 2} brings about a significant change in the objective function. While the neural network may successfully reach an optimal solution (indicated by the red point) for \textbf{Task 1}, this solution may not be the best for \textbf{Task 2} and even harm its performance. In other words, the network parameters at the red point are at risk of getting stuck in a local optimum that negatively affects \textbf{Task 2}. When training proceeds with data from task $f_{2}$, a subset of network parameters are already confined to these local optima. Consequently, the neural network must compensate by making more dramatic adjustments to the remaining, untrapped parameters to accommodate task $f_{2}$. This constrained optimization leads to two significant effects: a steeper loss landscape due to the reduced dimensionality of adjustable parameters \cite{LyleZNPPD23}, and the network potentially learning to suppress outputs associated with the trapped parameters if they interfere with task $f_{2}$ performance. This mechanism explains the observed variable increases in dormant rates across different scenarios \cite{sokar2023dormant}, as the phenomenon's occurrence depends heavily on the specific task relationships and their respective optimization landscapes. Furthermore, this framework elucidates why regularization methods that constrain the parameter space can effectively mitigate plasticity loss: by preventing parameters from becoming too deeply entrenched in local optima for any single task, they preserve the network's adaptability for subsequent learning.

\begin{figure*}[!htbp]
\centering
\includegraphics[width=4.5in]{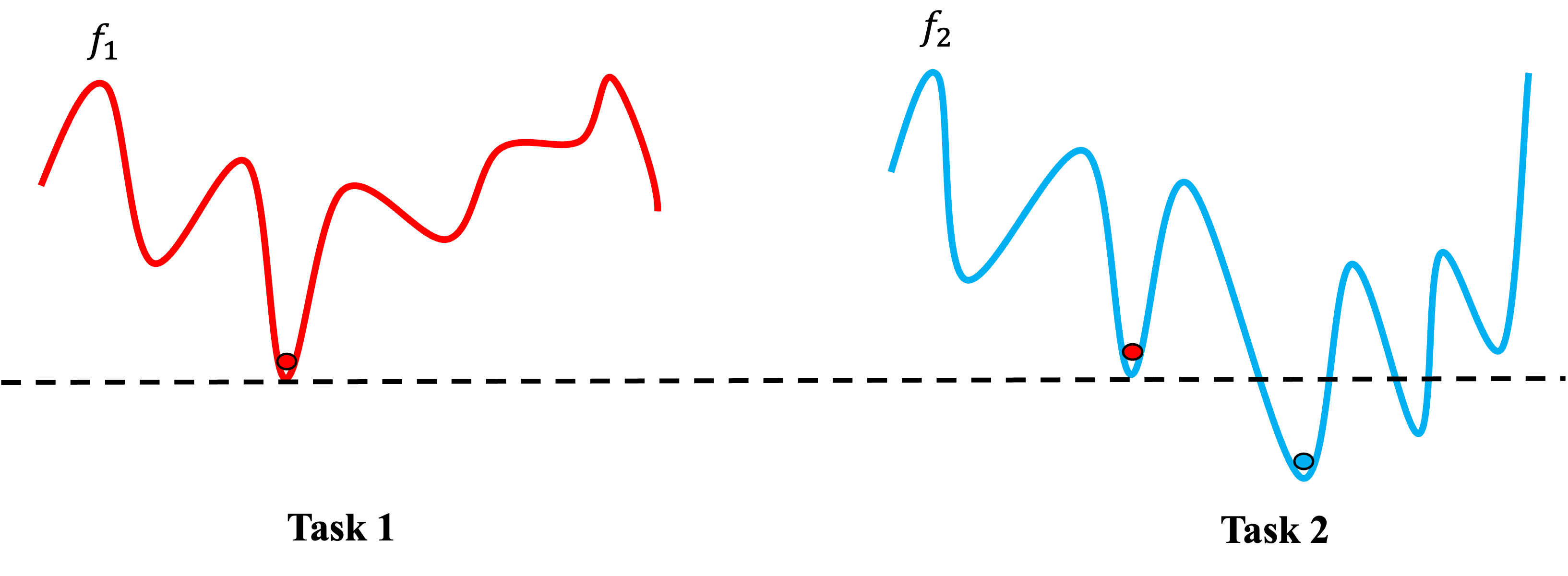}
\caption{Illustration of the OCP hypothesis.}
\label{example_ocp}
\end{figure*}

\subsection{Task Relevance and Plasticity Loss}

In this subsection, our goal is to demonstrate that the loss of neural network plasticity is dependent on the task at hand, thus proving that the phenomenon is caused by being stuck in a local optimum. Specifically, the inability of a neural network to continue learning in a given task, as indicated by a high proportion of dormant neurons, does not necessarily mean that the network is incapable of learning. Instead, it suggests that the network struggles with tasks that are similar to the one being performed. However, when the task changes significantly, such as when the optimization landscape undergoes a substantial shift, the network's neurons are able to learn and perform similarly to randomly initialized neurons, without experiencing a loss of plasticity.

\begin{align}
    y = 2.5 X_0 - 1.2 X_1^2 + 0.8 \sin(X_2) + 1.5 \cos(X_3) + 0.7 X_4 X_5 - 0.3 X_6^3 + e^{-0.1 X_7^2} + 1.1 X_8 - 0.5 X_9^2 + \nonumber \\ 
     0.9 \tanh(X_{10}) + 0.2 X_{11}^2 - 0.6 \sqrt{|X_{12}|} + 0.5 X_{13} X_{14} - 0.4 X_{15} + 0.3 X_{16} + \varepsilon, \varepsilon \sim \mathcal{N}(0, \sigma^2).
    \label{regression_problem}
\end{align}

To test our hypothesis on the loss of task-specific plasticity, we utilized a pre-trained value network from PPO (Proximal Policy Optimization) for a regression task. This task involves a 17-dimensional input vector $\mathbf{X}=(X_{0}, X_{1}, \cdots, X_{16}) \in \mathbb{R}^{17}$ and a single output value $y \in \mathbb{R}$. The true data-generating process is represented by Equation \ref{regression_problem}. More information on the experimental parameters, comparisons, and the reasoning behind the chosen functional form can be found in Appendix \ref{task_relevance_appendix}.

\begin{figure}[!h]
\centering
\includegraphics[width=2.5in]{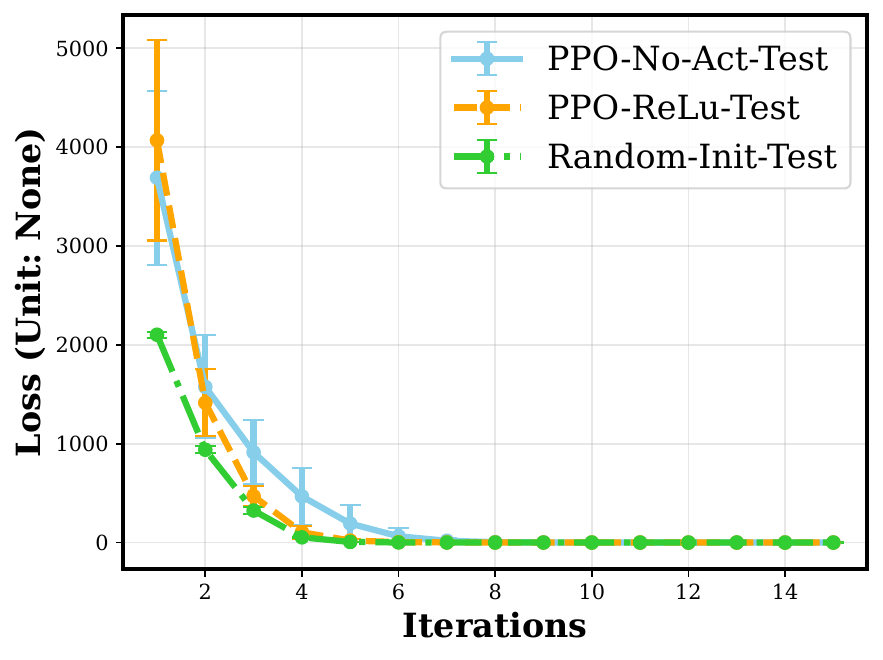}
\includegraphics[width=2.5in]{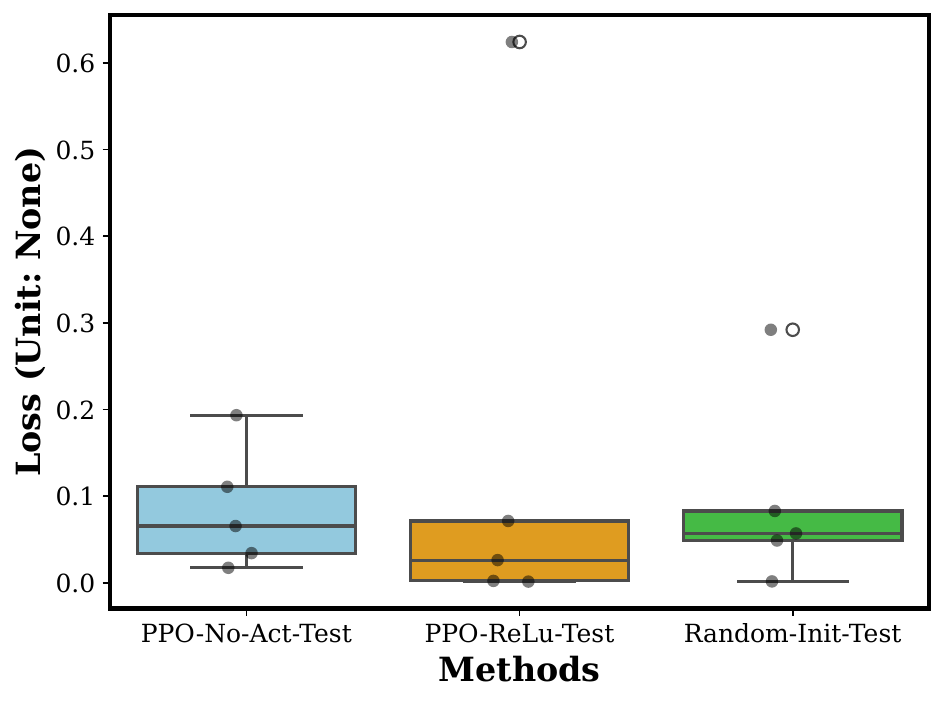}
\label{task_change}
\caption{The convergence curves of the loss function and the final performance comparison on the test set were observed under task switching with a learning rate of 0.01.}
\label{example}
\end{figure}

As shown in Figure 3, networks trained for 20 million steps with a low proportion of dormant neurons (PPO-No-Act-Test), networks trained for 20 million steps with a high proportion of dormant neurons (PPO-ReLU-Test), and randomly initialized networks (Random-Init-Test) all exhibited similar learning performance on the regression task. This suggests that when the task changes significantly, previously ineffective networks can successfully learn the new task, indicating that plasticity loss is not a fundamental issue. These results further validate our hypothesis that the loss of neural network plasticity is a task-specific phenomenon caused by neurons becoming trapped in a local optimum specific to the task. This evidence strongly supports our hypothesis.


\subsection{Dormancy as a Gradient-Induced Phenomenon}

In this section, we will establish the theoretical equivalence between zero-gradient neurons and dormancy. We will demonstrate that the absence of gradient signals is what drives neurons into a dormant state. Additionally, we will provide an optimization-based explanation for the tendency of the proportion of dormant neurons to increase when the underlying task changes.


We consider a neural network layer $l$ with input space $\mathcal{X}_l \subseteq \mathbb{R}^{n_l}$, parameterized by $\mathbf{W}_l \in \mathbb{R}^{n_{l+1} \times n_l}$ and $\mathbf{b}_l \in \mathbb{R}^{n_{l+1}}$, and with an activation function $\sigma_l: \mathbb{R} \to \mathbb{R}$ applied elementwise. The forward pass of layer $l$ is $f_l(\mathbf{x}) = \sigma_l(\mathbf{W}_l \mathbf{x} + \mathbf{b}_{l})$, and the output of the $i$-th neuron in layer $l$ is $h_{l, i}(\mathbf{x})=\sigma_l\left(\mathbf{w}_{l, i}^T \mathbf{x}+b_{l, i}\right)$, where $\mathbf{w}_{l,i}^T$ is the $i$-th row of $\mathbf{W}_l$.

\begin{definition}[{\textbf{Dormant Neuron}}]
Let $H_{l}$ be the number of neurons in layer $l$. The dormancy index $s_{l,i}$ for neuron ($l,i$) is defined as:

\begin{equation}
    s_{l,i} = \frac{\mathbb{E}_{\mathbf{x}\in D}|h_{l,i}(\mathbf{x})|}{\frac{1}{H_{l}}\sum_{k\in h}\mathbb{E}_{\mathbf{x}\in D}|h_{l, k}(\mathbf{x})|}
\end{equation}

where $\mathbb{E}_{\mathbf{x} \in D}$ denotes the expectation over the input $\mathbf{x}$ drawn from distribution $D$.
\end{definition}

\begin{assumption}[\textbf{Continuity and Differentiability}]
\label{continuity}
$h_{l,i}(\mathbf{x})$ is continuously differentiable with respect to $\mathbf{x}$, i.e., $h_{l,i} \in C^{1}(\mathbb{R}^{n_l})$.
\end{assumption}

\begin{assumption}[\textbf{Boundedness}]
\label{boundedness}
$\exists M > 0$ such that for all $k \in h$:
$\mathbb{E}_{\mathbf{\mathbf{x}} \in D}|h_{l,i}(\mathbf{\mathbf{x}})| \leq M$.
\end{assumption}

\begin{assumption}[\textbf{Non-degeneracy}]
\label{non_degeneracy}
$\exists m > 0$ such that $\frac{1}{H_{l}}\sum_{k\in h}\mathbb{E}_{\mathbf{x}\in D}|h_{l,i}(\mathbf{x})| \geq m$. This ensures the denominator in the dormancy index definition is strictly positive.
\end{assumption}

\begin{lemma}[\textbf{Zero Output Lemma}]
    \label{ZeroOutLemma}
    If $s_{l,i} = 0$, then $h_{l,i}(\mathbf{x}) = 0$ for all $\mathbf{x} \in D$. 
\end{lemma}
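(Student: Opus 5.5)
The argument has three steps. First, use Assumption~\ref{non_degeneracy} to clear the denominator of $s_{l,i}$. Then show the expected absolute activation vanishes. Finally, upgrade ``zero in expectation'' to ``zero everywhere on $D$,'' using the continuity from Assumption~\ref{continuity}.

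\textbf{Step 1 (numerator vanishes).} By Assumption~\ref{non_degeneracy}, the denominator $\frac{1}{H_l}\sum_k \mathbb{E}_{\mathbf{x}\in D}|h_{l,k}(\mathbf{x})|$ is at least $m>0$. By Assumption~\ref{boundedness}, every term is finite, so $s_{l,i}$ is a well-defined real number. Hence $s_{l,i}=0$ forces
\[
\mathbb{E}_{\mathbf{x}\in D}|h_{l,i}(\mathbf{x})| = 0 .
\]

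\textbf{Step 2 (zero almost everywhere).} The function $|h_{l,i}|$ is nonnegative and measurable; it is continuous by Assumption~\ref{continuity}. A nonnegative random variable with zero expectation is zero almost surely, for example by Markov's inequality applied to each level $1/n$ and then a countable union. Therefore $h_{l,i}(\mathbf{x})=0$ for $D$-almost every $\mathbf{x}$.

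\textbf{Step 3 (from almost every to every $\mathbf{x}\in D$).} This is the main obstacle, since ``for all $\mathbf{x}\in D$'' must be given a meaning that is stronger than ``almost surely.''

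I would interpret $\mathbf{x}\in D$ as $\mathbf{x}$ in the support of $D$. The support is the set of points every open neighbourhood of which has positive $D$-mass.

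Suppose, for contradiction, that $h_{l,i}(\mathbf{x}_0)\neq 0$ for some $\mathbf{x}_0$ in the support. By continuity there is an open ball $B$ around $\mathbf{x}_0$ on which $|h_{l,i}|>|h_{l,i}(\mathbf{x}_0)|/2$. Because $\mathbf{x}_0$ is in the support, $D(B)>0$. This gives
\[
\mathbb{E}|h_{l,i}| \ge \tfrac{1}{2}\,|h_{l,i}(\mathbf{x}_0)|\, D(B) > 0,
\]
which contradicts Step 1.

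If $D$ is an empirical (finite) distribution, as with a replay buffer, Step 3 is immediate. Every sample then carries positive weight, so a zero average of nonnegative terms forces each term to vanish.

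I would add a remark that the statement cannot hold for points outside the support, where $h_{l,i}$ is unconstrained. This is why the support interpretation is needed.
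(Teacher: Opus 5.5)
Your proof is correct and follows the paper's own argument: non-degeneracy forces $\mathbb{E}_{\mathbf{x}\in D}|h_{l,i}(\mathbf{x})|=0$, so $h_{l,i}=0$ almost everywhere, and continuity then gives $h_{l,i}=0$ at every point of $D$. Your reading of $\mathbf{x}\in D$ as ``$\mathbf{x}$ in the support of $D$'' states explicitly what the paper's continuity step tacitly assumes, namely that every neighbourhood of $\mathbf{x}_0$ has positive mass, so yours is the more rigorous version of the same argument.
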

\begin{proof}
    Since $s_{l,i} = 0$,
    \begin{equation}
        0=s_{l, i}=\frac{\mathbb{E}_{\mathbf{x} \in D}\left|h_{l, i}(\mathbf{x})\right|}{\frac{1}{H_l} \sum_{k=1}^{H_l} \mathbb{E}_{\mathbf{x} \in D}\left|h_{l, k}(\mathbf{x})\right|}.
    \end{equation}

    By the non-degeneracy assumption, the denominator is at least $m>0$. Thus, 
    \begin{equation}
        \mathbb{E}_{\mathbf{x} \in D}\left|h_{l, i}(\mathbf{x})\right|=0.
    \end{equation}
    Since $\left|h_{l, i}(\mathbf{x})\right| \geq 0$, the expectation of a nonnegative random variable is zero if and only if the variable is zero almost everywhere. Therefore,
    \begin{equation}
        h_{l, i}(\mathbf{x})=0 \quad \text {for almost all } \mathbf{x} \in D .
    \end{equation}
    By the continuity of $h_{l, i}$, if it were nonzero at any point $\mathbf{x}_{0} \in D$, continuity would imply a neighborhood around $\mathbf{x}_{0}$ where $h_{l, i}$ remains nonzero, contradicting the almost-everywhere zero condition. Hence, $h_{l,i}(\mathbf{x})=0$ for every $\mathbf{x} \in D$.
\end{proof}

\begin{lemma}[\textbf{Dormancy-to-Gradient Lemma}]
    \label{Dormancy_to_Gradient_Lemma}
    If $s_{l,i} = 0$, then $\Delta h_{l,i}(\mathbf{x}) = 0$ for all $\mathbf{x} \in D$.
\end{lemma}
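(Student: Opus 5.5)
Here $\Delta h_{l,i}(\mathbf{x})$ is read as the gradient $\nabla_{\mathbf{x}} h_{l,i}(\mathbf{x})$, the first-order variation of the neuron output. Assumption~\ref{continuity} guarantees that this gradient exists and is continuous. The plan is to get identical vanishing of $h_{l,i}$ from Lemma~\ref{ZeroOutLemma} and then differentiate the zero function. First, invoke Lemma~\ref{ZeroOutLemma}: since $s_{l,i}=0$, we have $h_{l,i}(\mathbf{x})=0$ for every $\mathbf{x}\in D$ (interpreting $D$ as the support of the input distribution, as in that lemma).

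Second, fix $\mathbf{x}_0\in D$ and compute each partial derivative as a limit:
\begin{equation*}
\frac{\partial h_{l,i}}{\partial x_j}(\mathbf{x}_0)=\lim_{t\to 0}\frac{h_{l,i}(\mathbf{x}_0+t\mathbf{e}_j)-h_{l,i}(\mathbf{x}_0)}{t}.
\end{equation*}
If $\mathbf{x}_0$ lies in the interior of $D$, then for all small $t$ both terms in the numerator are zero. Hence every partial derivative vanishes, and $\nabla_{\mathbf{x}}h_{l,i}(\mathbf{x}_0)=\mathbf{0}$.

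The main obstacle is points of $D$ that are not interior, for instance boundary points, or the case where $D$ is a lower-dimensional or discrete set. There, vanishing on $D$ does not force vanishing of derivatives in directions leaving $D$. I would first try the following. The proof of Lemma~\ref{ZeroOutLemma} already uses that $D$ is a support on which continuity-based neighborhood arguments apply, i.e.\ that each point of $D$ is a limit of interior points (take $D$ to be the closure of its interior, or open). Under this reading, the gradient is zero at all interior points by the previous step. Continuity of $\nabla_{\mathbf{x}} h_{l,i}$ from Assumption~\ref{continuity} then extends $\nabla_{\mathbf{x}} h_{l,i}=\mathbf{0}$ to every point of $D$ by taking limits.

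If $D$ has no interior, I would weaken the statement to directional derivatives along directions tangent to $D$, and state this regularity condition explicitly as the hypothesis under which the lemma holds.
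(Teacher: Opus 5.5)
Your proof is correct under the regularity hypothesis you make explicit. It rests on the same fact as the paper's argument: once Lemma~\ref{ZeroOutLemma} gives $h_{l,i}\equiv 0$ on $D$, the neuron vanishes on a neighborhood of the point, so its derivative there is zero.

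The paper reaches this less directly. It applies the mean value theorem $h_{l,i}(\mathbf{x}+\delta_{\mathbf{x}})-h_{l,i}(\mathbf{x})=\nabla h_{l,i}(\mathbf{x}+t\delta_{\mathbf{x}})^T\delta_{\mathbf{x}}$ and asserts that this difference is zero for every $\delta_{\mathbf{x}}\in\mathbb{R}^{n_l}$. It then derives a contradiction by choosing $\delta_{\mathbf{x}}=\epsilon\nabla h_{l,i}(\mathbf{y})$ and letting $\epsilon\to 0$, using continuity of the gradient.

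At interior points your difference quotient needs no smoothness at all, since a function vanishing near a point is differentiable there with zero derivative. You use $C^1$ only to pass to boundary points. More importantly, your route exposes what the paper leaves implicit. Its claim that the difference vanishes requires $\mathbf{x}+\delta_{\mathbf{x}}\in D$, which holds only if $D$ is open; for all $\delta_{\mathbf{x}}$, it holds only if $D=\mathbb{R}^{n_l}$. The paper never addresses boundary points or sets with empty interior.

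One correction to your justification: Lemma~\ref{ZeroOutLemma} does not already supply the interior condition. Its neighborhood argument only needs $D$ to be the support of the input distribution. A support can have empty interior, e.g.\ a finite sample (as in practice) or a distribution concentrated on a subspace.

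The condition also cannot be dropped. Take $n_l=1$, $\sigma_l=\tanh$, $w_{l,i}=1$, $b_{l,i}=0$, $D=\{0\}$, and a second neuron with bias $1$ to secure non-degeneracy. Then $s_{l,i}=0$ while $\nabla h_{l,i}(0)=1$.

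So adding the hypothesis explicitly, with your tangential-derivative statement as the fallback, is the right formulation. The direction (A)$\Rightarrow$(B) of Theorem~\ref{EquivalenceofDormancyandZeroGradient} inherits the same hypothesis.
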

\begin{proof}
    From Lemma \ref{ZeroOutLemma}, we know that if $s_{l,i}=0$, $\forall \mathbf{x} \in D, h_{l, i}(\mathbf{x})=0$. Let $\mathbf{x} \in D$ be arbitrary, and consider any perturbation $\delta_{x} \in \mathbb{R}^{n_l}$. By the Mean Value Theorem for vector-valued functions, there exists $t \in [0, 1]$ such that
    \begin{equation}
        h_{l, i}\left(\mathbf{x}+\delta_{\mathbf{x}}\right)-h_{l, i}(\mathbf{x})=\nabla h_{l, i}\left(\mathbf{x}+t \delta_{\mathbf{x}}\right)^T \delta_{\mathbf{x}}.
    \end{equation}
    Since $h_{l,i}(\mathbf{x}) = 0$ for every $\mathbf{x} \in D$, this difference is zero:
    \begin{equation}
        0=\nabla h_{l, i}\left(\mathbf{x}+t \delta_{\mathbf{x}}\right)^T \delta_{\mathbf{x}}.
    \end{equation}
    This equality must hold for all $\mathbf{x} \in D$ and for every possible $\delta_{x} \in \mathbb{R}^{n_l}$.  Suppose, for contradiction, that there exists some point $\mathbf{y} \in D$ where $\nabla h_{l, i}(\mathbf{y}) \neq 0$. Take $\delta_{\mathbf{x}}=\epsilon \nabla h_{l, i}(\mathbf{y})$ for a small $\epsilon>0$, then
    \begin{equation}
        \nabla h_{l, i}\left(\mathbf{y}+t \epsilon \nabla h_{l, i}(\mathbf{y})\right)^T\left(\epsilon \nabla h_{l, i}(\mathbf{y})\right)=0.
    \end{equation}
    As $\epsilon \rightarrow 0$, continuity of $\nabla h_{l, i}$ ensures $\nabla h_{l, i}\left(\mathbf{y}+t \epsilon \nabla h_{l, i}(\mathbf{y})\right) \rightarrow \nabla h_{l, i}(\mathbf{y})$, which is nonzero by assumption. Hence the inner product $\nabla h_{l, i}(\mathbf{y})^T\left(\epsilon \nabla h_{l, i}(\mathbf{y})\right)$ would not vanish for small $\epsilon \neq 0$, yielding a contradiction. Thus, no such $\mathbf{y}$ can exist, and we conclude
    \begin{equation}
        \nabla h_{l, i}(\mathbf{x})=0 \quad \forall \mathbf{x} \in D.
    \end{equation}
\end{proof}

\begin{lemma}[\textbf{Gradient-to-Dormancy Lemma}]
    Suppose $\nabla h_{l, i}(\mathbf{x})=0$ for all $\mathbf{x} \in \mathcal{X}_l$. Then $h_{l, i}(\mathbf{x})$ is constant on $\mathcal{X}_{l}$. Moreover, if $h_{l,i}(\mathbf{x}) = 0$ at any point $\mathbf{x}_0 \in \mathcal{X}_l$, then $h_{l, i}(\mathbf{x})=0$ for all $\mathbf{x} \in \mathcal{X}_l$, and hence the neuron is dormant $\left(s_{l, i}=0\right)$.
\end{lemma}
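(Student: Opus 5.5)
The plan is to prove the constancy claim first, then transfer the single zero value to the whole domain, and finally read off dormancy from the definition of $s_{l,i}$.

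For the first claim, take two arbitrary points $\mathbf{x}, \mathbf{y} \in \mathcal{X}_l$. Using Assumption \ref{continuity} ($h_{l,i} \in C^1$), connect them by the segment $\gamma(t) = \mathbf{x} + t(\mathbf{y}-\mathbf{x})$, $t \in [0,1]$, and apply the fundamental theorem of calculus along $\gamma$. This is the same mean value argument used in Lemma \ref{Dormancy_to_Gradient_Lemma}:
\begin{equation}
    h_{l,i}(\mathbf{y}) - h_{l,i}(\mathbf{x}) = \int_0^1 \nabla h_{l,i}(\gamma(t))^T (\mathbf{y}-\mathbf{x})\, dt = 0,
\end{equation}
since the gradient vanishes on $\mathcal{X}_l$. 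Hence $h_{l,i}(\mathbf{x}) = h_{l,i}(\mathbf{y})$, so $h_{l,i}$ equals a constant $c$ on $\mathcal{X}_l$.

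\textbf{Main obstacle.} The integral step requires the segment $\gamma$ to lie inside $\mathcal{X}_l$, because the gradient is only known to vanish there. I will therefore assume $\mathcal{X}_l$ is convex. More generally it suffices that $\mathcal{X}_l$ is connected and open, for example $\mathcal{X}_l = \mathbb{R}^{n_l}$ as in Assumption \ref{continuity}. In the connected open case I would argue differently: the set $\{\mathbf{x} : h_{l,i}(\mathbf{x}) = c\}$ is nonempty and closed by continuity, and open by the convex-ball argument applied locally. Connectedness then forces it to be all of $\mathcal{X}_l$. Without some such hypothesis the statement fails, since $h_{l,i}$ could take different constant values on different components. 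I would state this assumption explicitly.

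For the second claim, if $h_{l,i}(\mathbf{x}_0) = 0$ for some $\mathbf{x}_0 \in \mathcal{X}_l$, then $c = 0$ and so $h_{l,i} \equiv 0$ on $\mathcal{X}_l$. To conclude dormancy, note that the data distribution $D$ is supported in $\mathcal{X}_l$. Therefore $\mathbb{E}_{\mathbf{x}\in D}|h_{l,i}(\mathbf{x})| = 0$, so the numerator of $s_{l,i}$ is zero. By Assumption \ref{non_degeneracy} the denominator is at least $m > 0$, so the ratio is well defined and $s_{l,i} = 0$. Together with Lemma \ref{Dormancy_to_Gradient_Lemma}, this gives the claimed equivalence between dormancy and zero gradient, with the zero-value condition as the extra anchoring hypothesis.
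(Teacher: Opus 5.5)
Your proof is correct and follows the same route as the paper. You integrate the vanishing gradient along a path to get constancy on $\mathcal{X}_l$, use $h_{l,i}(\mathbf{x}_0)=0$ to force the constant to be $0$, and read off $s_{l,i}=0$ from $D \subseteq \mathcal{X}_l$; your explicit convexity (or open-and-connected) hypothesis and your appeal to non-degeneracy for the denominator make precise what the paper leaves implicit in its phrase ``connected domain'' and its path-unspecified integral $\int_{\mathbf{x}_1}^{\mathbf{x}_2}\nabla h_{l,i}(\mathbf{u})\,d\mathbf{u}$.
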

\begin{proof}
    Since $\nabla h_{l, i}(\mathbf{x})=0$ for every $\mathbf{x} \in \mathcal{X}_{l}$. Then $h_{l,i}$ does not change with respect to $\mathbf{x}$. By standard multivariable calculus, any function whose gradient is zero everywhere on a connected domain must be constant. Concretely, for any two points $\mathbf{x}_{1}, \mathbf{x}_{2} \in \mathcal{X}_{l}$, we have
    \begin{equation}
        h_{l, i}\left(\mathbf{x}_2\right)-h_{l, i}\left(\mathbf{x}_1\right)=\int_{\mathbf{x}_1}^{\mathbf{x}_2} \nabla h_{l, i}(\mathbf{u}) d \mathbf{u}=0,
    \end{equation}
    indicating $h_{l,i}(\mathbf{x}_{2}) = h_{l,i}(\mathbf{x}_{1})$. Therefore, $h_{l,i}(\mathbf{x})$ is constant on $\mathcal{X}_{l}$. Denote this constant by $c$, so $h_{l,i}(\mathbf{x}) = c$ for all $\mathbf{x} \in \mathcal{X}_{l}$.
    
    Next, assume there exists a point $\mathbf{x}_{0} \in \mathcal{X}$ such that $h_{l,i}(\mathbf{x}_{0})=0$, then $c=0$. Hence $h_{l,i}(\textbf{x})=0$ for every $\mathbf{x} \in \mathcal{X}_{l}$. In particular, for $\mathbf{x} \in D \subseteq \mathcal{X}_l$, we have $h_{l,i}(\mathbf{x}) = 0$. Consequently,
    \begin{equation}
        \mathbb{E}_{\mathbf{x} \in D}\left|h_{l, i}(\mathbf{x})\right|=0 \quad \Longrightarrow \quad s_{l, i}=\frac{0}{\frac{1}{H_H} \sum_k \mathbb{E}_{\mathbf{x} \in D}\left|h_{l, k}(\mathbf{x})\right|}=0 .
    \end{equation}
    Thus the neuron is fully dormant.
    
    \textbf{Remark.} If $c \neq 0$, then $\mathbb{E}_{\mathbf{x} \in D}\left|h_{l, i}(\mathbf{x})\right|$ would be $|c| \neq 0$, so $s_{l, i}$ could potentially be nonzero, indicating the neuron is not dormant. Therefore, the key condition that $h_{l,i}(\mathbf{x})=0$ at some point $\mathbf{x}_{0}$ in $\mathbf{X}_{l}$ (thus forcing $c=0$) is crucial to concluding $s_{l,i}=0$. In many network architectures, one can ensure $h_{l, i}(0)=0$ by design (e.g., bias initialized to zero and activation is ReLU), or rely on data/architectural constraints that force a zero constant rather than a nonzero one.
\end{proof}

\begin{theorem}[\textbf{Equivalence of Dormancy and Zero Gradient}]
\label{EquivalenceofDormancyandZeroGradient}
    Let $D \subseteq \mathcal{X}_l \subseteq \mathbb{R}^{n_l}$ be the domain from which inputs $\mathbf{x}$ are drawn. Consider layer $l$ with $H_{l}$ neurons. For the $i$-th neuron in layer $l$, let
    \begin{equation}
        h_{l, i}(\mathbf{x})=\sigma_l\left(\mathbf{w}_{l, i}^T \mathbf{x}+b_{l, i}\right),
    \end{equation}
    where $\sigma_{l}$ is continuously differentiable and applied elementwise. The neuron’s \emph{dormancy index} is defined as:
    \begin{equation}
        s_{l, i}=\frac{\mathbb{E}_{\mathbf{x} \in D}\left|h_{l, i}(\mathbf{x})\right|}{\frac{1}{H_l} \sum_{k=1}^{H_l} \mathbb{E}_{\mathbf{x} \in D}\left|h_{l, k}(\mathbf{x})\right|}
    \end{equation}
    \textbf{Assume}:
    \begin{enumerate}
        \item \textbf{(Continuity and Differentiability)} $h_{l,i}(\mathbf{x})$ is in $C^{1}(\mathbb{R}^{n_l})$.
        \item \textbf{(Boundedness)} $\exists M>0$: $\mathbb{E}_{\mathbf{x} \in D}\left|h_{l, i}(\mathbf{x})\right| \leq M$.
        \item \textbf{(Non-degeneracy)} $\exists m>0: \frac{1}{H_l} \sum_{k=1}^{H_l} \mathbb{E}_{\mathbf{x} \in D}\left|h_{l, k}(\mathbf{x})\right| \geq m$.
    \end{enumerate}
    Then the following two statements are equivalent:
    \begin{itemize}
        \item (A) \textbf{Dormancy}: $s_{l,i}=0$. Equivalently, $\mathbb{E}_{\mathbf{x} \in D}\left|h_{l, i}(\mathbf{x})\right|=0$.
        \item (B) \textbf{Zero Gradient (on $D$) plus at least one zero activation}:
        \begin{equation}
            \nabla h_{l, i}(\mathbf{x})=0 \quad \forall \mathbf{x} \in D \quad \text { and } \quad \exists \mathbf{x}_0 \in D: h_{l, i}\left(\mathbf{x}_0\right)=0.
        \end{equation}
    \end{itemize}
\end{theorem}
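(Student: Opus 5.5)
The plan is to prove the two directions separately, each time reusing the lemmas stated above. The one point needing care is that the zero-gradient hypothesis in (B) only holds on $D$, not on all of $\mathcal{X}_l$.

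\textbf{(A) $\Rightarrow$ (B).} Assume $s_{l,i}=0$. Under the non-degeneracy assumption, Lemma~\ref{ZeroOutLemma} gives $h_{l,i}(\mathbf{x})=0$ for every $\mathbf{x}\in D$. This immediately yields the second clause of (B): any $\mathbf{x}_0\in D$ works, since $D$ is nonempty as the support of a distribution. Lemma~\ref{Dormancy_to_Gradient_Lemma} then gives $\nabla h_{l,i}(\mathbf{x})=0$ for all $\mathbf{x}\in D$, which is the first clause.

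I would add a remark on this step. The derivative argument of Lemma~\ref{Dormancy_to_Gradient_Lemma} is only fully rigorous when $D$ contains a neighbourhood of each of its points. A cleaner route is available. Write $z=\mathbf{w}_{l,i}^T\mathbf{x}+b_{l,i}$, so that $\nabla h_{l,i}(\mathbf{x})=\sigma_l'(z)\,\mathbf{w}_{l,i}$. If $h_{l,i}$ vanishes on an open set, then either $\mathbf{w}_{l,i}=0$, or $\sigma_l$ vanishes on an open interval of pre-activations, in which case $\sigma_l'$ vanishes there too. In both cases the gradient is zero on $D$. For this I would assume $D$ is the closure of an open set (or open), which is the setting implicitly used in Lemma~\ref{ZeroOutLemma}.

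\textbf{(B) $\Rightarrow$ (A).} Assume $\nabla h_{l,i}=0$ on $D$ and $h_{l,i}(\mathbf{x}_0)=0$ for some $\mathbf{x}_0\in D$. The plan is to follow the Gradient-to-Dormancy Lemma, with $\mathcal{X}_l$ replaced by $D$. For any $\mathbf{x}\in D$, integrate the gradient along a path from $\mathbf{x}_0$ to $\mathbf{x}$ lying in $D$; this shows $h_{l,i}(\mathbf{x})=h_{l,i}(\mathbf{x}_0)=0$. Hence $\mathbb{E}_{\mathbf{x}\in D}|h_{l,i}(\mathbf{x})|=0$. By non-degeneracy the denominator of $s_{l,i}$ is at least $m>0$, so $s_{l,i}=0$. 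Boundedness is used only to make every expectation finite, so that the ratio is well defined.

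\textbf{Main obstacle.} The hard step is exactly this path argument. It requires $D$ to be (path-)connected, or, with $C^1$ regularity, connected and open up to closure. Without that, $h_{l,i}$ could take different constant values on different components. I would handle this in one of two ways. The first is to state connectedness of $D$ as an implicit standing assumption, consistent with the Gradient-to-Dormancy Lemma, which already invokes a connected domain. The second is to exploit the ridge structure $h_{l,i}(\mathbf{x})=\sigma_l(\mathbf{w}_{l,i}^T\mathbf{x}+b_{l,i})$. If $\mathbf{w}_{l,i}\neq0$, the zero-gradient condition forces $\sigma_l'=0$ on the image set $\{\mathbf{w}_{l,i}^T\mathbf{x}+b_{l,i}:\mathbf{x}\in D\}$. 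When this set is an interval (true whenever $D$ is connected), $\sigma_l$ is constant on it. That constant equals $\sigma_l$ at the point $\mathbf{w}_{l,i}^T\mathbf{x}_0+b_{l,i}$, which is $0$. So connectedness of $D$ is the single extra hypothesis I would make explicit.
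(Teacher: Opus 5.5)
Your proof is correct and follows the paper's own argument. For (A)$\Rightarrow$(B) you use the Zero Output Lemma and then the Dormancy-to-Gradient Lemma, and for (B)$\Rightarrow$(A) you conclude that $h_{l,i}$ is constant on $D$, hence identically zero, so $s_{l,i}=0$ by non-degeneracy. The extra hypotheses you make explicit are genuinely necessary and are used only tacitly in the paper. Its Dormancy-to-Gradient Lemma treats $h_{l,i}(\mathbf{x}+\delta_{\mathbf{x}})$ as zero for arbitrary $\delta_{\mathbf{x}}$, and the statement fails for $D$ inside the hyperplane $\mathbf{w}_{l,i}^T\mathbf{x}+b_{l,i}=0$ with $\sigma_l=\tanh$; likewise ``constant throughout $D$'' fails for disconnected $D$. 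Your reduction to $\sigma_l'=0$ on the interval $\{\mathbf{w}_{l,i}^T\mathbf{x}+b_{l,i}:\mathbf{x}\in D\}$ is a clean way to get constancy from connectedness alone, with no path regularity needed.
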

\begin{proof} \textbf{(A) $\Rightarrow$ (B)}: Suppose $s_{l,i} = 0$. By Lemma \ref{ZeroOutLemma}, $h_{l,i}(\mathbf{x}) = 0$ for all $\mathbf{x} \in D$. Then by the Lemma \ref{Dormancy_to_Gradient_Lemma}, we get $\nabla h_{l, i}(\mathbf{x})=0$ for all $\mathbf{x} \in D$. Clearly, in this scenario, pick $\mathbf{x}_{0} \in D$ arbitrarily; $h_{l, i}(\mathbf{x}_{0}) = 0$. Thus the right-hand side of the equivalence is satisfied. \textbf{(A) $\Leftarrow$ (B)}: Conversely, assume $\nabla h_{l, i}(\mathbf{x})=0$ for all $\mathbf{x} \in D$ and there exists at least one point $\mathbf{x}_{0} \in D$ where $h_{l,i}(\mathbf{x}_{0}) = 0$. Since $\nabla h_{l, i} \equiv 0$ on $D$, the neuron's output $h_{l,i}$ is constant throughout $D$. That constant must be zero (as it is zero at $\mathbf{x_{0}}$). Hence $h_{l,i}(\mathbf{x}) = 0$ for all $\mathbf{x} \in D$. By Lemma \ref{ZeroOutLemma} (applied in reverse logic), if the neuron is identically zero on $D$, then $s_{l, i}=0$.
\end{proof}

\begin{figure}[!h]
\centering
\includegraphics[width=6.5in]{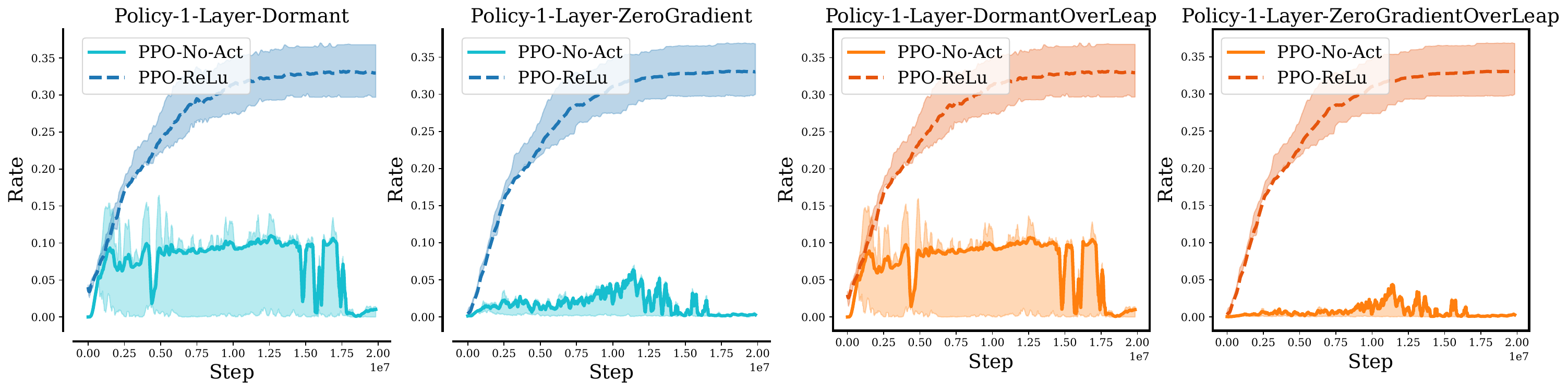}
\includegraphics[width=6.5in]{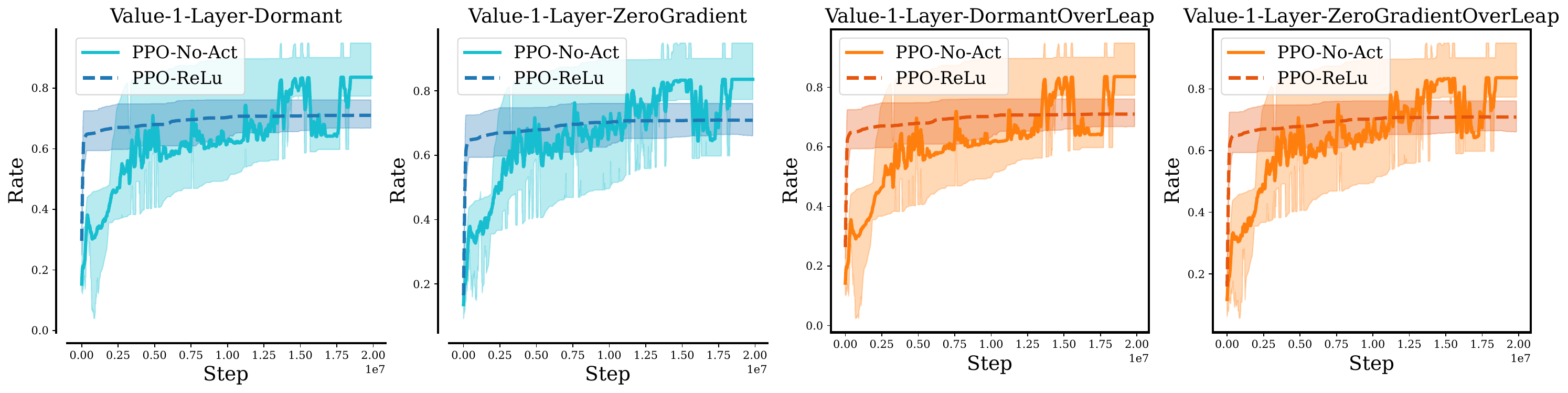}
\caption{The relationship between dormant neurons and zero gradient neurons in last layer of policy and value neural network in HalfCheetah-v4 environment. The Dormancy Index and ZeroGradient Index show strong correlation. The OverLeap metric reflects the proportion of neurons that remain dormant or zero-gradient across iterations, further highlighting this consistency.}
\label{relationship_policy_value}
\end{figure}

\textbf{Theorem \ref{EquivalenceofDormancyandZeroGradient}} states that a neuron with a dormancy index of zero will result in zero outputs on the dataset, causing a zero gradient that will keep it in a dormant state. On the other hand, if a neuron's gradient is negligible, it will either remain or become close to zero in output, resulting in no updates. Even small fluctuations that push its parameters to a constant or near-zero state will further reinforce this condition, particularly in networks with ReLU activation functions. The experimental results in Figure \ref{relationship_policy_value} also validate our theory: there is a strong correlation between dormant neurons and neurons with zero gradients. Similar to dormant neurons, Zero Gradient neurons gradually increase during the training process. Once a neuron enters a dormant or zero-gradient state, it becomes irrecoverable. The OverLeap plot illustrates this phenomenon clearly. Further experimental setups and analyses can be found in Appendix \ref{dormantneuron_and_zerogradientneuron}.

From an optimization standpoint, parameter updates follow gradient signals. However, when the task changes, the loss function also changes, often resulting in parameters being near old local optima with negligible or zero gradients for the new objective. This can lead to neurons specialized in outdated features becoming or remaining dormant. Even a slight shift in the task can cause the network's parameters to stay close to their previous minima, perpetuating low gradients and increasing the rate of dormancy. On the other hand, a significant task change usually invalidates the old minima, resulting in stronger gradients that can "wake up" previously dormant neurons. Once a neuron's output reaches zero, its gradient also becomes small, creating a self-reinforcing loop that keeps it dormant unless strong new signals are received. Theoretically and practically, dormancy is closely linked to having zero or vanishingly small gradients.

\bibliographystyle{unsrtnat}
\bibliography{references}  

\newpage

\appendix
\section{Supplementary Details on Task Relevance and Plasticity Loss}
\label{task_relevance_appendix}

The state space of HalfCheetah-v4 is 17-dimensional. In order to assess the plasticity of its Value function, we have devised a regression problem with a 17-dimensional input vector $\mathbf{X}=(X_{0}, X_{1}, \cdots, X_{16}) \in \mathbb{R}^{17}$ and a scalar output $y \in \mathbb{R}$. The true underlying data-generating process is described by Equation \ref{regression_problem_appendix}.

\begin{align}
    y = 2.5 X_0 - 1.2 X_1^2 + 0.8 \sin(X_2) + 1.5 \cos(X_3) + 0.7 X_4 X_5 - 0.3 X_6^3 + e^{-0.1 X_7^2} + 1.1 X_8 - 0.5 X_9^2 + \nonumber \\ 
     0.9 \tanh(X_{10}) + 0.2 X_{11}^2 - 0.6 \sqrt{|X_{12}|} + 0.5 X_{13} X_{14} - 0.4 X_{15} + 0.3 X_{16} + \varepsilon, \varepsilon \sim \mathcal{N}(0, \sigma^2).
    \label{regression_problem_appendix}
\end{align}

In Equation \ref{regression_problem_appendix}, the variable $\varepsilon$ represents an additive noise term that is randomly sampled from a zero-mean Gaussian distribution with a variance of $\sigma^{2}$, where $\sigma=0.1$. The function $y$ includes a variety of mathematical operators, such as linear, quadratic, cubic, trigonometric (sin, cos), exponential ($e^{-0.1 X_{7}^{2}}$), and hyperbolic (tanh) transformations, as well as interaction terms (e.g., $X_{4}X_{5}$, $X_{13}X_{14}$) and non-smooth functions (e.g., $\sqrt{|X_{12}|}$). This diverse structure introduces significant nonlinearity and complexity to the regression task. The main purpose of this design is to ensure that the problem has a wide range of optimization landscapes, allowing us to test whether a neural network can effectively adapt to a new and substantially different task if it becomes trapped in local optima under one task.

\begin{figure}[ht]
\centering
\subfloat[]{
    \includegraphics[width=2.5in]{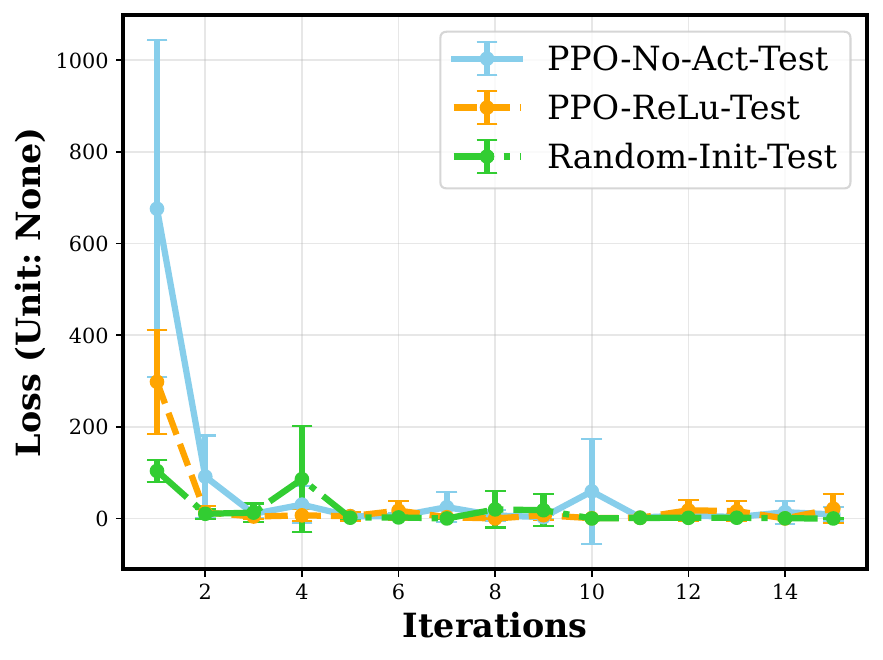}
    \includegraphics[width=2.5in]{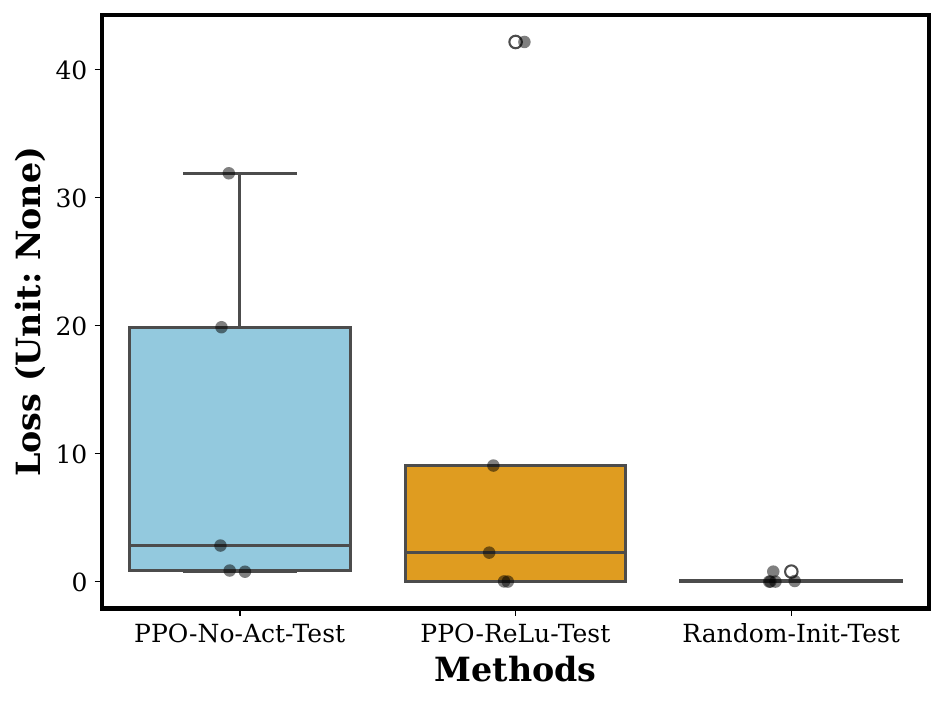}
\label{fig0_a}}%
\vfill
\subfloat[]{
    \includegraphics[width=2.5in]{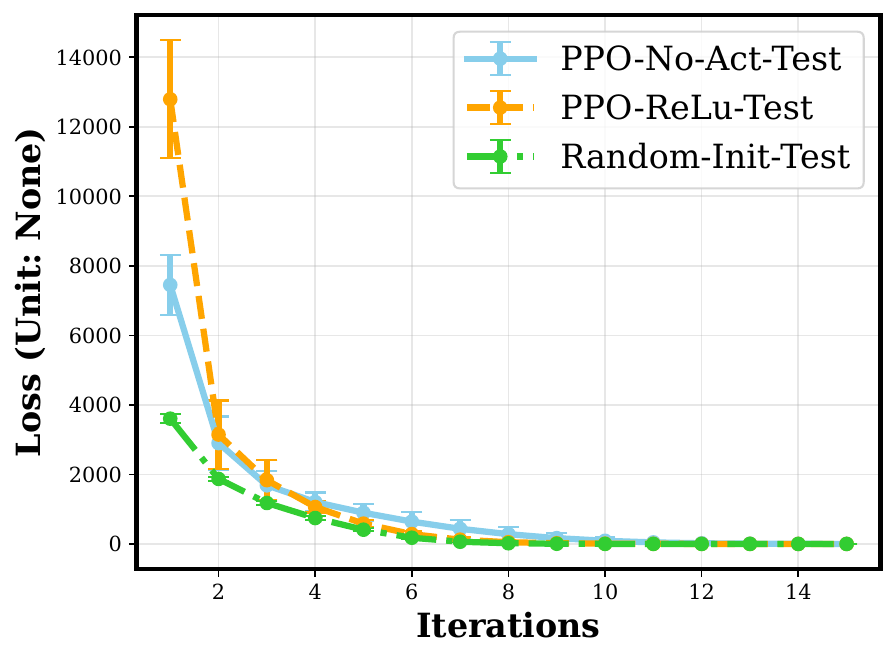}
    \includegraphics[width=2.5in]{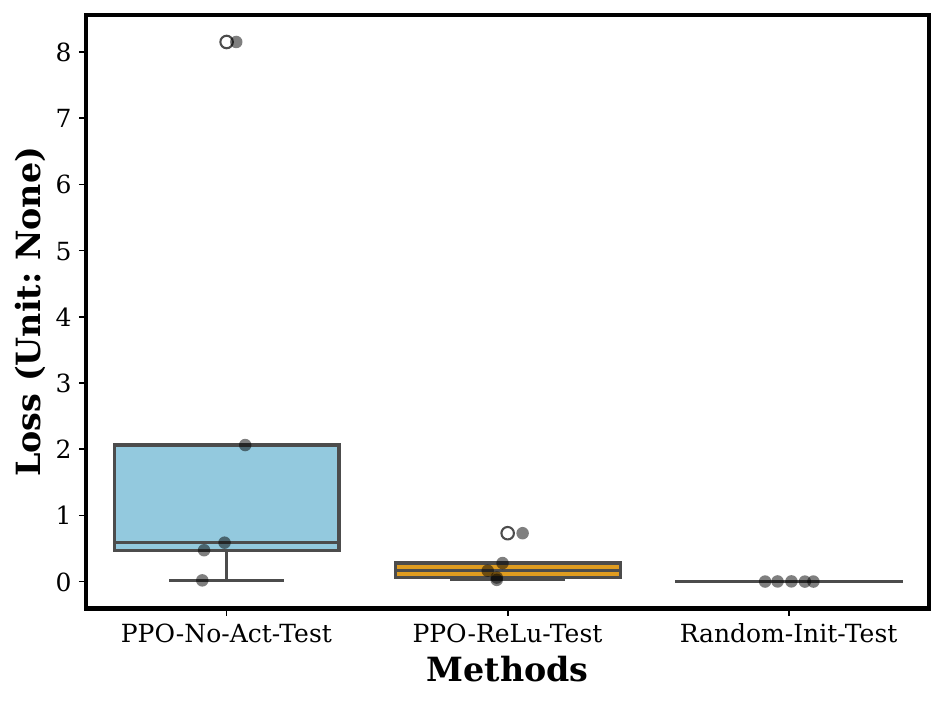}
\label{fig0_b}}%

\caption{Loss convergence curves and final convergence results under different learning rates. (a) shows the results for a learning rate of 0.01, while (b) shows the results for a learning rate of 0.005.}
\label{diverse_lr_rate}
\end{figure}

The training and test datasets consist of 1,000 randomly generated samples, and the loss function used is Mean Squared Error (MSE). The experimental results are shown in Figure \ref{diverse_lr_rate}. "PPO-No-Act-Test" refers to the results on the test dataset where the initial weights of the value function were obtained from the PPO algorithm after 20 million steps in the HalfCheetah-v4 environment without activation functions. "PPO-ReLU-Test" represents the test results using initial weights from the PPO algorithm after 20 million steps in the same environment with ReLu activation functions. "Random-Init" indicates the test results with randomly initialized weights. It can be observed that under different learning rates, the convergence speed and final results of pre-trained neural networks with a high proportion of dormant weights show no significant difference compared to those of randomly initialized networks. This demonstrates that the plasticity of neural networks is highly task-dependent.

\newpage
\section{The Experimental Setups And Analyses of Dormant Neuron and Zero Gradient Neuron}
\label{dormantneuron_and_zerogradientneuron}

\begin{definition}[{\textbf{Mean Absolute Gradient Intensity, MAGI}}]
We define the Mean Absolute Gradient Intensity (MAGI) for the $i$-th neuron in layer $l$ as the mean absolute value of the gradients of all incoming weights to that neuron, averaged over the input feature dimension:

\begin{equation}
    G_i^{(l)}=\frac{1}{n_{\text {in }}} \sum_{j=1}^{n_{\text {in }}}\left|\frac{\partial S}{\partial w_{i j}^{(l)}}\right|
\end{equation}

where $n_{in}$ is the input dimension of layer $l$, representing the number of input features or neurons connected to the $i$-th neuron in the previous layer. $w_{i,j}^{(l)}$ is the weight connecting the $j$-th input feature (or neuron) to the $i$-th neuron in layer $l$. $S$ is the target function, defined as:

\begin{equation}
    S=\sum_{k=1}^{n_{\text {batch }}} \sum_{m=1}^{n_{\text {out }}} \mathbf{Y}_{k, m}
\end{equation}

where $n_{batch}$ is the batch size, representing the number of samples in the current batch. $n_{out}$ is the output dimension of layer $l$, i.e., the number of neurons in the current layer. $\mathbf{Y} \in \mathbb{R}^{n_{\text {batch }} \times n_{\text {out }}}$ is the output of layer $l$, computed as:

\begin{equation}
    \mathbf{Y}=\mathbf{X} \cdot \mathbf{W}_l^{\top}+\mathbf{b}_l
\end{equation}

where $\mathbf{X} \in \mathbb{R}^{n_{\text {batch }} \times n_{\text {in }}}$ is the input to layer $l$, consisting of $n_{batch}$ samples, each with $n_{in}$ features. $\mathbf{W}_l \in \mathbb{R}^{n_{\text {out }} \times n_{\text {in }}}$ is the weight matrix of layer $l$, connecting the input to the output neurons. $\mathbf{b}_l \in \mathbb{R}^{n_{\text {out }}}$ is the bias vector of layer $l$.
\end{definition}

The gradient $\frac{\partial S}{\partial w_{i j}^{(l)}}$ represents the sensitivity of the target function $S$ to changes in the weight $w_{i j}^{(l)}$, and the absolute value is taken to avoid positive and negative gradients canceling each other out. The mean over $n_{in}$ input feature reflects the overall sensitivity of the neuron to its incoming connections. MAGI quantifies the importance of a neuron in layer $l$ by summarizing the average gradient magnitude across all its incoming weights. It provides a concise and interpretable measure of the role a neuron plays during optimization and can be used to identify important neurons, prune redundant ones, and guide architectural decisions in model design.

\begin{definition}[\textbf{Overlap Coefficient for Neuron Dormancy or Zero Gradient}]

The overlap coefficient quantifies the similarity between the set of dormant or zero gradient neurons in each layer of a neural network at the current iteration and last iteration \cite{sokar2023dormant}. Let $A$ denote the set of dormant or zero gradient neurons at the current iteration, and $B$ denote the previous time step dormant and zero gradient. The overlap coefficient between $A$ and $B$ is defined as :

\begin{equation}
    \operatorname{overlap}(A, B)=\frac{|A \cap B|}{\min (|A|,|B|)},
\end{equation}

This metric is used to measure the proportion of dormant neurons or zero-gradient neurons that remain consistent over time, providing insights into the network's insight state.
    
\end{definition}

In this study, we have chosen to use the PPO algorithm as our basic algorithm framework. We have trained our model for a total of 20 million timesteps. To optimize the learning process, we have implemented an annealed learning rate starting at  $1 \times 10^{-4}$ and a weight decay of $1 \times 10^{-4}$. In order to stabilize the training, we have also utilized a gamma discount factor of 0.99 and Generalized Advantage Estimation (GAE) with $\lambda = 0.95$. Our policy gradient is optimized using 32 minibatches over 10 update epochs per iteration, with gradient clipping applied using a maximum norm of 0.5. Additionally, we have employed a clipped surrogate loss with a clipping coefficient of 0.2 and ensured value function stability through a value loss coefficient of 0.5. The number of dormant neurons and zero-gradient neurons in each layer of the Policy and Value neural networks is shown in Figure \ref{dormantandzerogradient}.

\begin{figure}[!h]

\centering
\includegraphics[width=3.2in]{figures/policy_1_dormant_and_gradient.pdf}
\includegraphics[width=3.2in]{figures/value_1_dormant_and_gradient.pdf}
\includegraphics[width=3.2in]{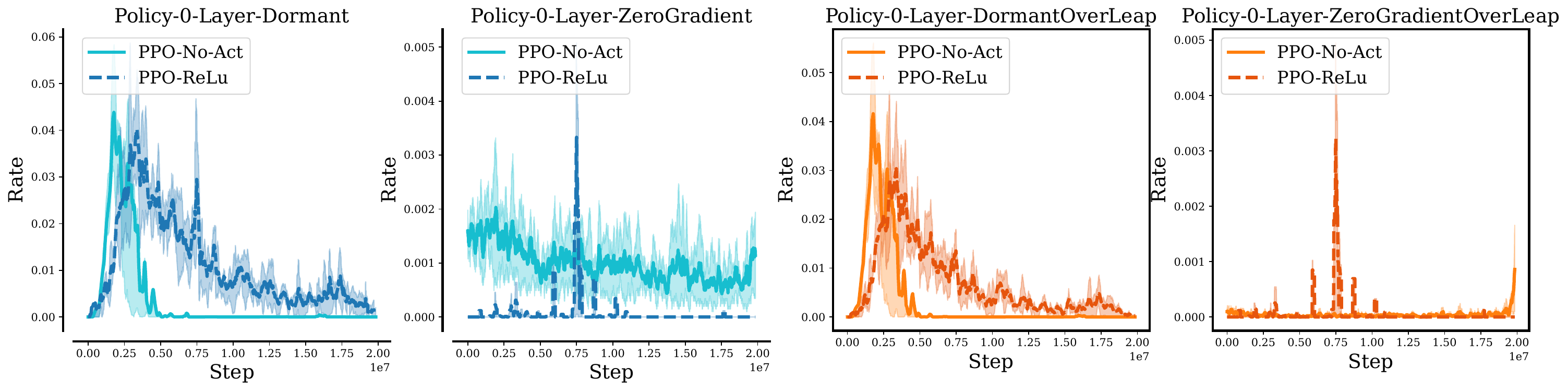}
\includegraphics[width=3.2in]{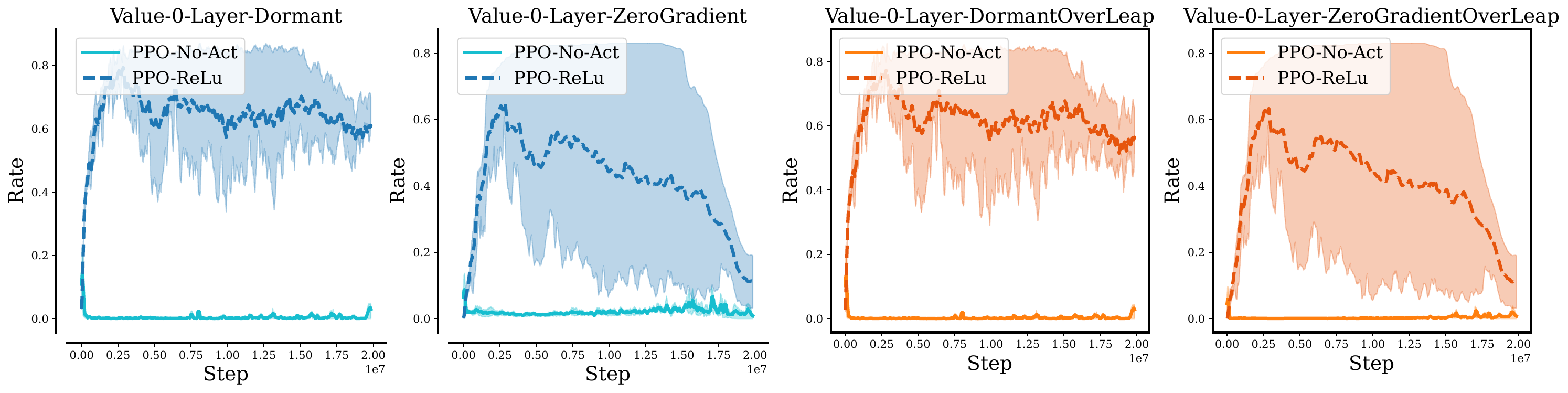}
\caption{The relationship between dormant neurons and zero gradient neurons in each layer neural network in HalfCheetah-v4 environment.}
\label{dormantandzerogradient}
\end{figure}

From the experimental results, we can draw the following conclusions:

\begin{itemize}
    \item Similar to dormant neurons, Zero Gradient neurons gradually increase during the training process.
    \item Once neurons enter the Zero Gradient state, they cannot exit this state.
    \item The proportion of dormant neurons and zero-gradient neurons exhibits a high degree of consistency, particularly in the final layer of the neural network.
    \item The dormancy rate and zero-gradient proportion of neurons in the final layer are significantly higher than the zero-gradient proportion in the preceding layer.
    \item Under the Value-0-Layer, Zero Gradient neurons possess a certain degree of recovery ability and can also help reduce dormant neurons.
\end{itemize}

Overall, the experimental results validate our Theorem \ref{EquivalenceofDormancyandZeroGradient} (\textbf{Equivalence of Dormancy and Zero Gradient}). However, there are still some discrepancies between the results and the theoretical predictions, such as the dormancy rate of neurons being slightly higher than the zero-gradient proportion in the 0th layer of the Value network. Additionally, why are the dormancy rate and zero-gradient proportion higher in the final layer of neurons?

There are several common reasons why one might observe a neuron whose empirical “dormancy” measures out to zero (or very close to zero) yet whose gradient is not exactly zero in practice, despite the theoretical result stating that a strictly zero‐output neuron must have a strictly zero gradient. Below are some explanations:

\textbf{Finite Sampling vs. Theoretical Domain}

The proof assumes the neuron’s output is identically zero for all inputs $\mathbf{x}$ in the data domain. In a real experiment, you only have a finite sample of points (minibatches, training sets, etc.), and the neuron might be zero on average or nearly zero for that finite sample, rather than genuinely zero everywhere. Consequently, even if the dormancy metric is numerically zero, the neuron may still produce small nonzero values for some unobserved inputs, hence giving rise to small but nonzero gradients.

\textbf{Floating-Point and Numerical Approximation}
Computers store and process numbers in floating-point representation. Values that are “close to zero” often end up being rounded to zero (or appear extremely small) during forward passes and in statistics like the dormancy measure. However, backpropagation and gradient calculations may amplify tiny differences, producing nonzero gradients even if the output is extremely close to zero in the forward pass. This discrepancy is a byproduct of finite precision arithmetic and can cause “dormant” neurons to have small, nonzero gradients.

\textbf{Momentum and Other Optimizer Dynamics}
If your optimization includes momentum (e.g., in SGD with momentum, Adam, RMSProp, etc.), the gradient update for a neuron can remain nonzero temporarily, even if the neuron’s instantaneous output is consistently near zero. The optimizer’s internal state (momentum buffer, adaptive learning rates, etc.) may carry over nonzero update terms that slightly adjust the neuron’s parameters despite the neuron having negligible or zero output on the dataset.

\textbf{Regularization and Other Loss Components}
In many practical neural networks, the total loss function is not just the standard training objective (e.g., cross‐entropy) but also includes regularization terms (weight decay, L1/L2 penalties, etc.). Even if the neuron output is zero, changes in that neuron’s parameters might still affect the regularization loss term, thereby giving a small gradient signal with respect to those parameters.

\textbf{“Almost Dormant” vs. Strictly Dormant}
The theoretical result “If output is strictly zero for all $\mathbf{x} \in D$, then gradient is strictly zero” is exact only in an ideal mathematical sense. In practice, a dormancy index of zero typically means “so small that it was rounded or thresholded to zero,” not that the neuron is provably zero on the entire input domain. A minuscule nonzero output can still lead to a nontrivial gradient.

\textbf{Distribution Shift or Data Subset Effects}
Sometimes the neuron is dormant on the training set or a certain subset of data but might not be truly zero on the broader distribution or on another part of the dataset. Therefore, when gradients are computed over a slightly different batch or distribution slice, that neuron might receive a small gradient even though it was “dormant” under the original dataset slice.

There are several common reasons why the final layer in a neural network is often more prone to having dormant (or effectively zero‐gradient) neurons:

\textbf{Direct Dependency on the Loss}
The gradient signals reaching the final layer come directly from the loss function. If certain output neurons (or individual dimensions in the final layer) do not reduce the loss under the current task, they receive negligible or zero gradients. Consequently, those neurons see no incentive to update, causing their outputs to drift toward zero (dormancy) and remain there.

\textbf{Reduced Intermediary “Revival” Opportunities}
In earlier layers, if a neuron is near‐zero, it can still get “woken up” by meaningful signals from subsequent layers or skip connections. By contrast, in the final layer, there are no additional transformations after it—if the loss does not depend on that particular neuron’s output, there is no mechanism to reignite its gradient. Thus, once it goes dormant, it stays that way.

\textbf{Sharper Task Specialization}
The final layer often encodes the most task-specific features (such as class logits). When the task changes (e.g., in transfer learning or domain shift), certain final‐layer neurons that were specific to now‐irrelevant classes or features may suddenly become useless for reducing the new loss. With no gradient feedback, those neurons quickly or permanently drop to zero output.

\textbf{Sparsity Induced by Softmax or Output Activation}
In classification, the final layer frequently feeds into a softmax function. If one or more logits consistently fail to be competitive, they produce negligible probability mass and generate little to no gradient over multiple training steps. This leads to a self‐reinforcing effect: low output yields low gradient, which keeps the output low.

\textbf{Fewer Parameter Interactions}
Earlier layers often have rich parameter interactions (via multiple subsequent layers), giving nonzero gradient signals to neurons that could still help shape intermediate representations. In the final layer, each output neuron is directly tied to a single dimension of the final prediction. If that dimension is not contributing to reducing the loss, its gradient can vanish without being “rescued” by other parts of the network.

\newpage
\section{Statistics about Weights}

\begin{figure}[!h]
\centering
\includegraphics[width=6.5in]{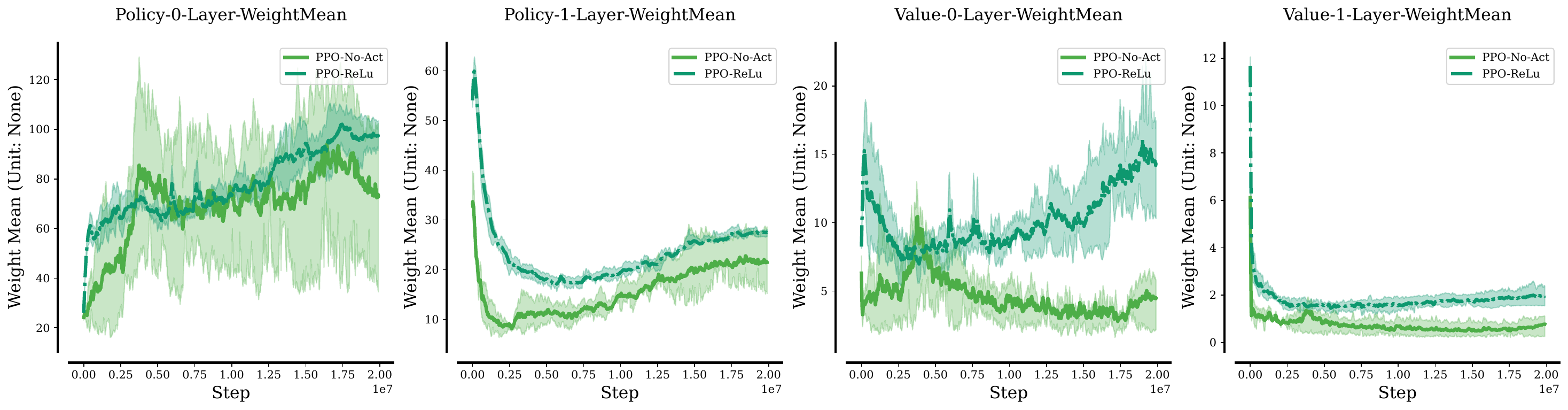}
\includegraphics[width=6.5in]{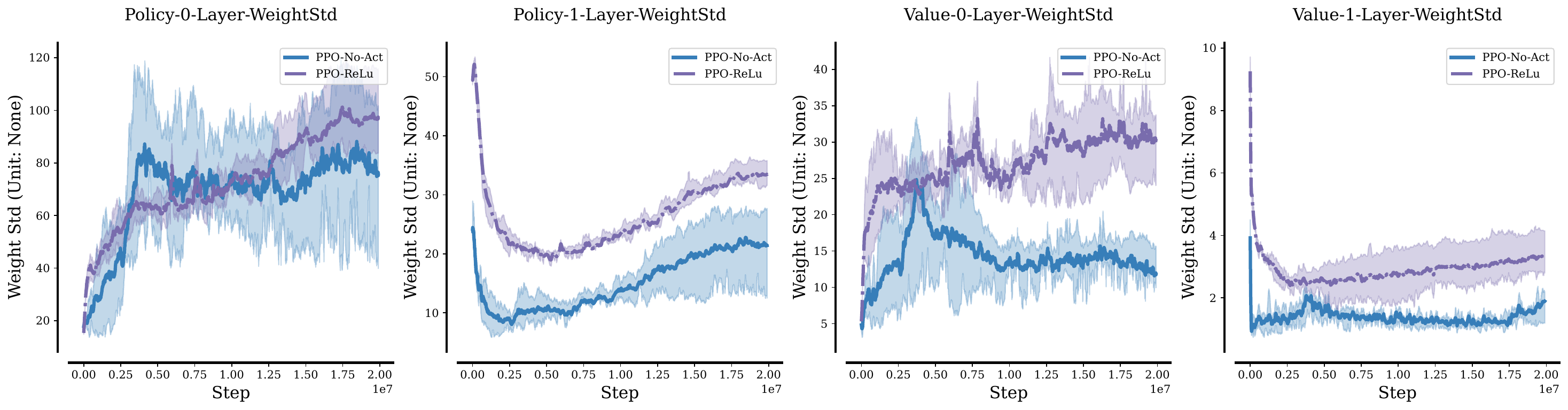}
\includegraphics[width=6.5in]{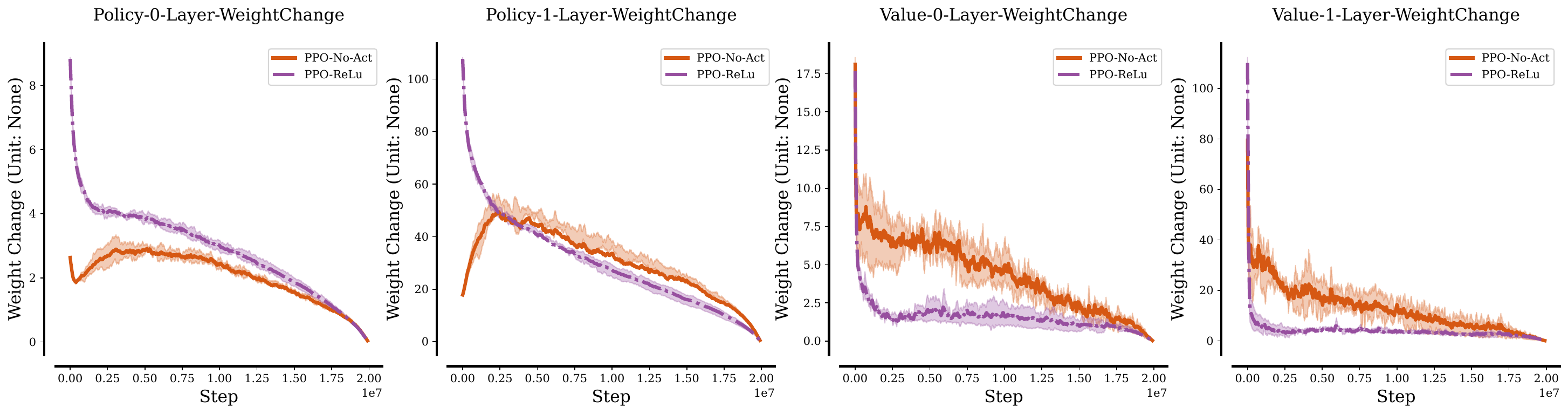}
\label{return_and_dormant}

\caption{Weight Statistic Information}
\label{example}
\end{figure}

\newpage
\section{Statistics about Ranks}

\begin{figure}[!h]
\centering
\includegraphics[width=6.5in]{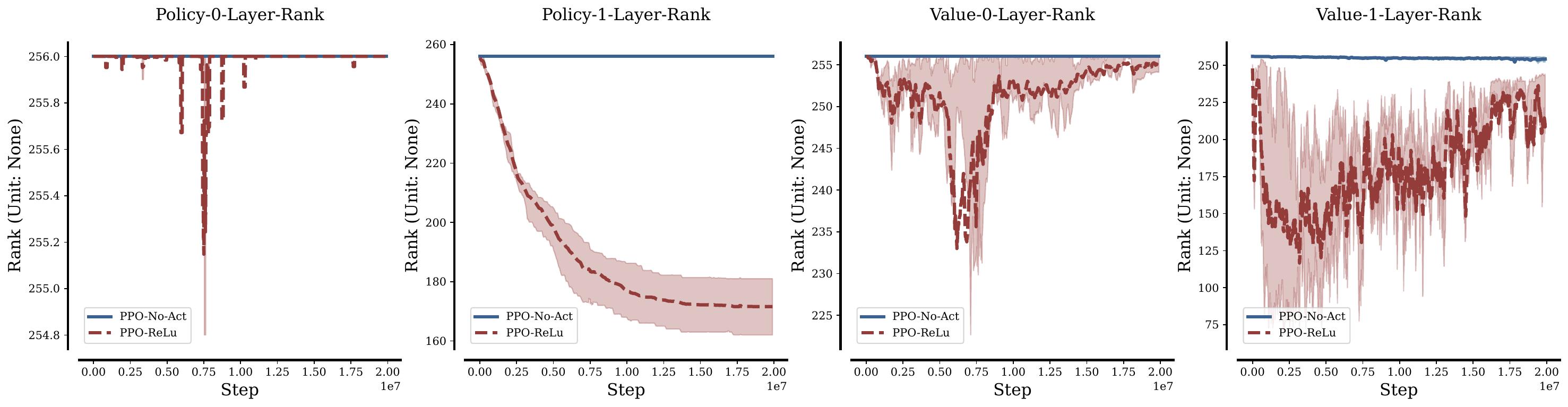}
\includegraphics[width=6.5in]{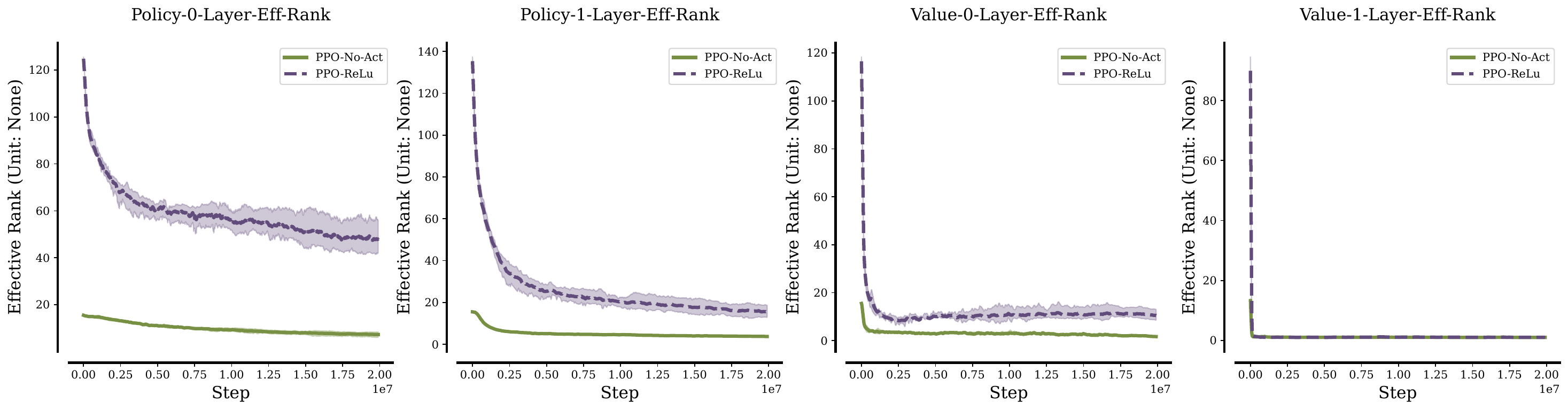}
\includegraphics[width=6.5in]{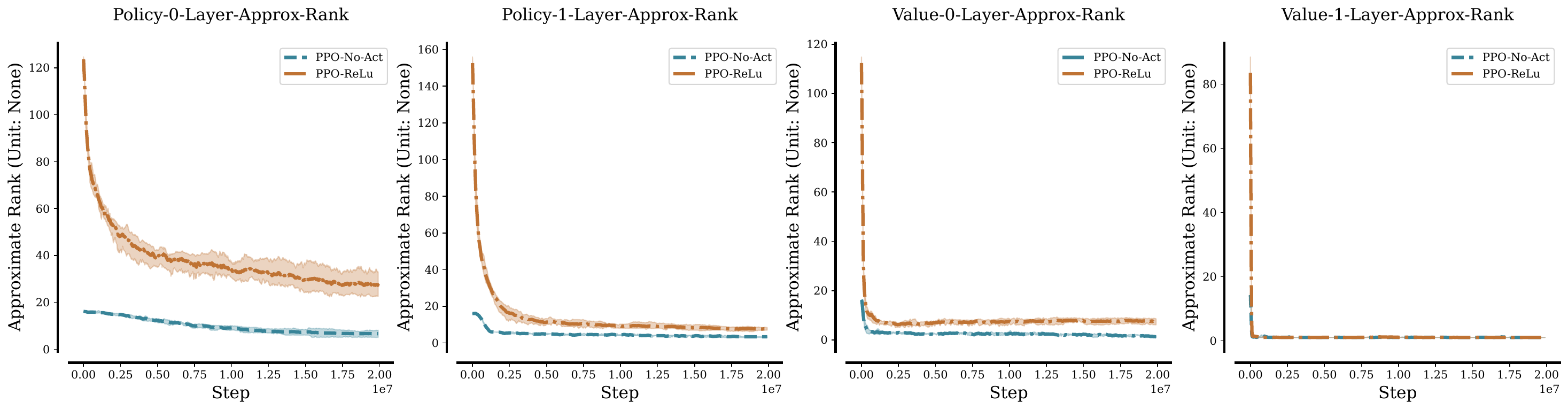}
\includegraphics[width=6.5in]{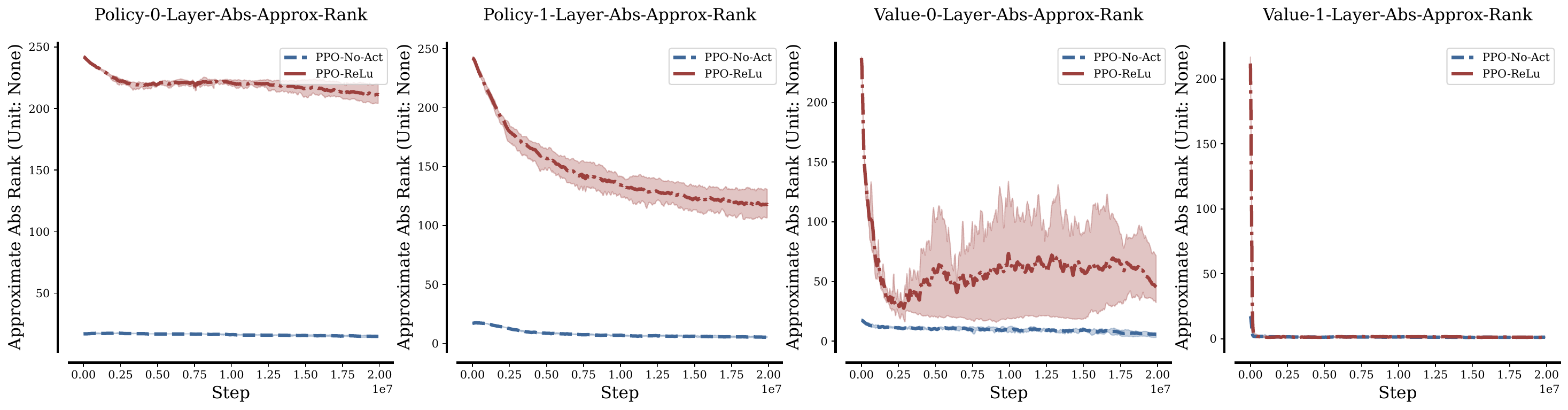}
\label{return_and_dormant}

\caption{Weight Statistic Information}
\label{example}
\end{figure}






\end{document}